\title{Thompson Sampling For Stochastic Bandits with Graph Feedback}
\author{Aristide C. Y. Tossou \\ Computer Science and Engineering \\ Chalmers University of Technology \\ Gothenburg, Sweden\\ aristide@chalmers.se
  \And
  Christos Dimitrakakis\\  
  University of Lille, France\\ Chalmers University of Technology\\  Harvard University, USA\\ christos.dimitrakakis@gmail.com
  \And
  Devdatt Dubhashi \\ Computer Science and Engineering \\ Chalmers University of Technology \\ Gothenburg, Sweden\\ dubhashi@chalmers.se
}
\DeclareSymbolFont{bbold}{U}{bbold}{m}{n}
\DeclareSymbolFontAlphabet{\mathbbold}{bbold}
\numberwithin{equation}{section}
\theoremstyle{plain}
\newtheorem{corollary}{Corollary}
\newtheorem{lemma}{Lemma}
\newtheorem{theorem}{Theorem}
\theoremstyle{definition}
\newtheorem{definition}{Definition}
\theoremstyle{remark}
\newtheorem{remark}{Remark}
\numberwithin{corollary}{section}
\numberwithin{lemma}{section}
\numberwithin{theorem}{section}
\numberwithin{assumption}{section}
\numberwithin{fact}{section}
\numberwithin{definition}{section}
\numberwithin{example}{section}
\numberwithin{conjecture}{section}
\numberwithin{remark}{section}
\numberwithin{claim}{section}
\newcounter{numrel}
\renewcommand{\thenumrel}{\roman{numrel}}
\newcommand{\numrel}[2]{
	\refstepcounter{numrel}
	\ltx@label{#2}
	\overset{(\thenumrel)}{#1}
}
\newcommand{\algparbox}[1]{\parbox[t]{\dimexpr\linewidth-\algorithmicindent}{#1‌\strut}}
\newcommand \E {\mathop{\mbox{\ensuremath{\mathbb{E}}}}\nolimits}
\renewcommand \Pr {\mathop{\mbox{\ensuremath{\mathbb{P}}}}\nolimits}
\newcommand{\cset}[2]{\left\{\, #1 ~\middle|~ #2 \,\right\} }
\newcommand\Reals {{\mathds{R}}}
\newcommand \CP {{\mathcal{P}}}
\newcommand \CY {{\mathcal{Y}}}
\newcommand \CZ {{\mathcal{Z}}}
\newcommand \defn {\mathrel{\triangleq}}
\newcommand \argmax{\mathop{\rm arg\,max}}
\newcommand \dd{\,\mathrm{d}}
\DeclareMathAlphabet{\mathpzc}{OT1}{pzc}{m}{it}
\newcommand \Bernoulli {\mathop{\mathpzc{Bernoulli}}\nolimits}
\newcommand \Beta {\mathop{\mathpzc{Beta}}\nolimits}
\newcommand \Params {\Theta}
\newcommand \param {\theta}
\tikzstyle{utility}=[diamond,draw=black,draw=blue!50,fill=blue!10,inner sep=0mm, minimum size=8mm]
\tikzstyle{select}=[rectangle,draw=black,draw=blue!50,fill=blue!10,inner sep=0mm, minimum size=6mm]
\tikzstyle{hidden}=[dashed,draw=black]
\tikzstyle{RV}=[circle,draw=black,draw=blue!50,fill=blue!10,inner sep=0mm, minimum size=6mm]
\def\clap#1{\hbox to 0pt{\hss#1\hss}}
\let\OldStatex\Statex
\renewcommand{\Statex}[1][3]{
	\setlength\@tempdima{\algorithmicindent}
	\OldStatex\hskip\dimexpr#1\@tempdima\relax}
\newtheorem{proposition}[theorem]{Proposition}
\newcommand\NNODES{{500}}
\newcommand\NEDGESA{{2500}} 
\newcommand\NEDGESB{{62625}} 
\newcommand{\NTRIALS}{{\ensuremath{210}}}
\newcommand{\NGROUPS}{{\ensuremath{14}}}
\newcommand{\CONFIDENCE}{{\ensuremath{0.955}}}
\newcommand\erdos {{Erdős–Rényi}} 
\newcommand\UCBN{\textsf{UCB-N}} 
\newcommand\UCBMAX{\textsf{UCB-MaxN}}
\newcommand\TSN{\textsf{TS-N}}
\newcommand\TSMAXN{\textsf{TS-MaxN}} 
\newcommand\TS{Thompson Sampling} 
\newcommand\EPSGREEDYLP{$\epsilon$-\textsf{greedy-LP}}
\newcommand\EPSGREEDYD{$\epsilon$-\textsf{greedy-$\mathcal{D}$}}
\newcommand \Neighbor {\mathcal{N}}
\newcommand{\aGENT}{{agent}}
\newcommand{\aRM}{{arm}}
\newcommand{\aRMS}{{arms}}
\newcommand{\rOUND}{{round}}
\newcommand{\rOUNDS}{{rounds}}
\newcommand{\gAIN}{{reward}}
\newcommand{\gAINS}{{rewards}}
\newcommand{\NARM}{{K}}
\newcommand{\HORIZON}{\ensuremath{T}}
\newcommand \ACT {\ensuremath{\mathcal{V}}}
\newcommand \EDGES {\ensuremath{\mathcal{E}}}
\newcommand\REGRET {\ensuremath{\mathcal{L}}}
\newcommand\POLICY {\ensuremath{\pi}}
\newcommand\GRAPH{\ensuremath{G}}
\newcommand\CLIQUE{\ensuremath{\mathcal{C}}}
\newcommand {\INFO} {\mathbb{I}}
\newcommand {\ENTROPY} {\mathbb{H}}
\newcommand {\eqva} {\overline{a}}
\begin{document}

\maketitle

\begin{abstract}
We present a novel extension of Thompson Sampling for stochastic
sequential decision problems with graph feedback, even when the graph
structure itself is unknown and/or changing. We provide theoretical
guarantees on the Bayesian regret of the algorithm, linking its
performance to the underlying properties of the graph. Thompson
Sampling has the advantage of being applicable without the
need to construct complicated upper confidence bounds for different
problems. We illustrate its performance through extensive experimental
results on real and simulated networks with graph feedback. More
specifically, we tested our algorithms on power law, planted
partitions and \erdos{} graphs, as well as on graphs derived from
Facebook and Flixster data. These all show that our algorithms clearly
outperform related methods that employ upper confidence bounds, even
if the latter use more information about the graph.
 \end{abstract}

\section{Introduction}
Sequential decision making problems under uncertainty appear in most
modern applications, such as automated experimental design,
recommendation systems and optimisation. The common structure of these
applications that, at each time step $t$, the decision-making agent is
faced with a choice. After each decision, it obtains some
problem-dependent feedback~\cite{CesaBianchi-Lugosi:PLG}. For the
so-called \emph{bandit} problem, the choices are between different
\emph{arms}, and the feedback consists of a single scalar reward
obtained by the arm at time $t$. For the \emph{prediction} (or
full-information) problem, it obtains the reward of the chosen arm,
but also observes the rewards of all other choices at time $t$.  In
both cases, the problem is to maximise the total reward obtained over
time. However, dealing with specific types of feedback may require
specialised algorithms. In this paper, we show that the \emph{Thompson
  sampling} algorithm can be applied successfully to a range of
sequential decision problems, whose feedback structure is characterised
by a graph.

Our algorithm is an extension of Thompson sampling, introduced
in \cite{thompson1933lou}. Although easy to implement and effective in
practice, it remained unpopular until relatively recently. Interest
grew after empirical studies
\cite{scott2010modern,chapelle2011empirical} demonstrated performance
exceeding state of the art.  This has prompted a surge of interest in
Thompson sampling, with the first theoretical
results~\cite{agrawal:thompson} and industrial
adoption~\cite{scott2015multi} appearing only recently.
However, there are still only a few theoretical results and many of these are in the simplest settings. However, it is easy to implement and effective under very many different settings with complex feedback structures, and there is thus great need to extend the theoretical results to these wider settings.

\citeauthor{russo2014information} argue that Thompson sampling is a
very effective and versatile strategy for different \emph{information
  structures}.  Their paper focuses on specific examples: the two
extreme cases of no and full information mentioned above and the
case of linear bandits and combinatorial feedback.

Here we consider the case where the feedback is defined through a
graph~\cite{caron12,alon2015online}. More
specifically, the arms (choices) are vertices of a (potentially
changing) graph and when an arm is chosen, we see the reward of that
arm as well as its neighbours. On one hand, it is a clean model for
theoretical and experimental analysis and on the other hand, it also
corresponds to realistic settings in social networks, for example in
advertisement settings (c.f. \cite{caron12}).

We provide a problem-independent\footnote{In the sense that it does
  not depend on the reward structure.} regret bound that is
parametrized by the clique cover number of the graph and naturally
generalizes the two extreme cases of zero and full information. We
present two variants of Thompson sampling, that are both very easy to
implement and computationally efficient. The first is straightforward
Thompson sampling, and so draws an \aRM{} according to its probability
of being the best, but also uses the graph feedback to update the
posterior distribution. The second one can be seen as sampling cliques
in the graph according to their probability of containing the best
\aRM{}, and then choosing the empirically best \aRM{} in that clique.
Neither algorithm requires knowledge of the complete graph.

Almost all previous algorithms require the full structure of the
feedback graph in order to operate. Some require the entire graph for
performing their updates only at the end of round
(e.g. \cite{alon2015online})
Others actually need the description of the graph at the beginning of
the round to make their decision 
and almost none of the algorithms previously proposed in the
literature is able to provide non-trivial regret guarantees without
the feedback graphs being disclosed. However, \citeauthor{cohen2016online}  (\citeyear{cohen2016online}) argue that the assumption that the entire
observation system is revealed to the learner on each round, even if
only after making her prediction, is rather unnatural. In principle,
the learner need not be even aware of the fact that there is a graph
underlying the feedback model; the feedback graph is merely a
technical notion for us to specify a set of observations for each of
the possible arms. Ideally, the only signal we would like the agent to
receive following each round is the set of observations that
corresponds to the arm she has taken on that round (in addition to the
obtained reward). Our algorithms work in this setup - they do not need
the whole graph to be disclosed either when selecting the arm or when
updating beliefs - only the local neighborhood is needed. Furthermore,
the underlying graph is allowed to change arbitrarily at each step.
The detailed proofs of all our main results are available in the full version of this paper.

\section{Setting}

\subsection{The stochastic bandit model}
The stochastic \NARM-armed bandit problem is a well known sequential decision problem involving an \aGENT{} sequentially choosing among a set of \NARM{} \aRMS{} $\ACT{} = \{ 1 \ldots \NARM{} \}$. At each \rOUND{} $t$, the \aGENT{} plays an \aRM{} $A_t \in \ACT{}$ and receives a \gAIN{} $r_t = R(Y_{t, A_t})$, where $Y_{t,A_t} : \Omega \to \CY $  is a random variable defined on some probability space $(P, \Omega, \Sigma)$ and $R : \CY \to \Reals$ is a \gAIN{} function.

Each  \aRM{} $i$ has mean reward $\mu_i(P) = \E_P R(Y_{t, i})$.
 Our goal is to maximize its expected cumulative \gAIN{} after \HORIZON{} \rOUNDS{}. An equivalent notion is to minimize the expected regret against an oracle which knows $P$. More formally, the expected regret $\E^\POLICY_P\REGRET{}$ of an agent policy $\POLICY$ for a bandit problem $P$ is defined as:
\begin{align}
	\E^\POLICY_P\REGRET{} = \HORIZON\mu_{*}(P) -  \E_P^{\POLICY}\sum_{t=1}^{\HORIZON{}} r_{A_t},
\end{align} 
where $\mu_{*}(P) = \max_{i \in \ACT} \mu_i(P)$ is the mean of the optimal \aRM{} and $\POLICY(A_t | h_t)$ is the policy of the \aGENT{}, defining a probability distribution on the next \aRM{} $A_t$  given the history $h_t = \langle A_{1:t-1}, r_{1:t-1} \rangle$ of previous \aRMS{} and \gAINS{}.

The main challenge in this model is that the \aGENT{} does not know $P$, and it only observes the reward of the \aRM{} it played. As a consequence, the \aGENT{} must trade-off exploitation (taking the apparently  best \aRM{}) with exploration (trying out other \aRMS{} to assess their quality). 

The Bayesian setting offers a natural way to model this uncertainty, by assuming that the underlying probability law $P$ is in some set $\CP = \cset{P_\param}{\param \in \Params}$ parametrised by $\param$, over which we define a prior probability distribution $\Pr$. In that case, we can define the Bayesian regret:
\begin{equation}
  \label{eq:Bayesian-regret}
  \E^\POLICY \REGRET{}  = \int_\Params \E^\POLICY_{P_\param} (\REGRET{}) \dd \Pr(\param) .
\end{equation}
A policy with small Bayesian regret may not be uniformly good in all $P$. 
Since in the Bayesian setting we frequently need to discuss posterior probabilities and expectations, we also introduce the notation $\E_t f \defn \E (f \mid h_t)$ and $\Pr_t(\cdot) \defn \Pr(\cdot \mid h_t)$ for expectations and probabilities conditioned on the current history.

\subsection{The graph feedback model}
In this model, we assume the existence of an undirected graph $\GRAPH{} = (\ACT,\EDGES)$ with vertices corresponding to arms. By taking an arm $a \in \ACT$, we not only receive the reward of the arm we played, but we also observe the rewards of all neighbouring arms $\Neighbor{}_a= \cset{a' \in \ACT}{(a,a') \in E}$.
More precisely, at each time-step $t$ we observe $Y_{t, a'}$ for all $a' \in \Neighbor{}_{A_t}$, while our reward is still $r_t = R(Y_{t, A_t})$.

If the graph is empty, then the setting is equivalent to the bandit problem. If the graph is fully connected, then it is equivalent to the prediction (i.e. full information) problem. However, many practical graphs, such as those derived from social networks, have an intermediate connectivity. In such cases, the amount of information that we can obtain by picking an arm can be characterised by graph properties, such as the clique cover number:
\begin{definition}[Clique cover number]
A clique covering $\CLIQUE$ of a graph
 \GRAPH{} is a partition of all its vertices  into sets $S \in \CLIQUE{}$ such that the sub-graph formed by each $S$ is a clique i.e. all vertices in $S$ are connected to each other in \GRAPH{}. The smallest number of cliques into which the nodes of \GRAPH{} can be partitioned is called the clique cover number.  We denote by $\CLIQUE{}(\GRAPH)$ the minimum clique cover and $\chi(\overline{\GRAPH})$ its size, omitting $\GRAPH$ when clear from the context. 
\end{definition}
The domination number is another useful similar notion for the amount of information that we can obtain.
\begin{definition}[Domination number]
A dominating set in a graph $\GRAPH = (\ACT,\EDGES)$ is a subset $U \subseteq \ACT$ such that for every vertex $u \in V$, either  $u \in U$ or $(u,v) \in \EDGES$ for some $v \in U$. The smallest size of a dominating set in $G$ is called the domination number of $\GRAPH$ and denoted $\gamma(\GRAPH)$.  
\end{definition}

\section{Related work and our contribution}
Optimal policies for the stochastic multi-armed bandit problem were
first characterised by~\cite{lai1985asymptotically}, while index-based
optimal policies for general non-parametric problems were given
by~\cite{burnetas1997optimal}. Later \cite{finitetimemab} proved
finite-time regret bounds for a number of UCB (Upper Confidence Bound)
index policies, while \cite{garivier2011kl} proved finite-time bounds
for index policies similar to those of \cite{burnetas1997optimal},
with problem-dependent bounds $O(\NARM \ln T)$. Recently, a number of
policies based on sampling from the posterior distribution
(i.e. Thompson sampling\cite{thompson1933lou}) were analysed in both
the frequentist~\cite{agrawal:thompson} and Bayesian
setting~\cite{russo2014information} and shown to obtain the same order
of regret bound for the stochastic case.  For the \emph{adversarial}
bandit problem the bounds are of order $O(\sqrt{\NARM T})$.  The
analysis for the full information case generally results in
$O(\ln (\NARM) \sqrt{T})$ bounds on the
regret~\cite{CesaBianchi-Lugosi:PLG}, i.e. with a much lower
dependence on the number of arms.

Intermediate cases between full information and bandit feedback can be
obtained through graph feedback, introduced in
\cite{mannor2011bandits}, which is the focus of this paper.  In
particular, \cite{caron12} and \cite{alon2015online} analysed graph
feedback problems with stochastic and adversarial reward sequences
respectively. Specifically, \citeauthor{caron12} analysed variants of Upper
Confidence Bound policies, for which they obtained
$O(\chi(\overline{G})\ln T)$ problem-dependent bounds. In more recent work,
\cite{cohen2016online} also introduced algorithms for graphs where the
structure is never fully revealed showing that (unlike the bandit
setting) there is a large gap in the regret between the adversarial
and stochastic cases. In particular, they show that in the adversarial
setting one cannot do any better than ignore all additional feedback,
while they provide an action-elimination algorithm for the stochastic
setting.
Finally, \cite{buccapatnam2014stochastic} obtain a problem-dependent bound of
the form  $O(\gamma^*(G) \log T + K \delta)$
where $\gamma^*$ is the linear programming relaxation to $\gamma$ and $\delta$ is the minimum degree of $G$.

\paragraph{Contributions.} In this paper, we provide much simpler strategies based on Thompson sampling, with a matching regret bound. Unlike previous work, these are also applicable to graphs whose structure is unknown or changing over time. More specifically:
\begin{enumerate}
\item We extend~\cite{russo2014information} to graph-structured feedback, and obtain a problem-independent bound of $O(\sqrt{\frac{1}{2} \chi(\overline{G}) T})$.
\item Using planted partition models, we verify the bound's dependence on the clique cover.
\item We provide experiments on data drawn from two types of random graphs: \erdos{} graphs and power law graphs, showing that our algorithms clearly outperform UCB and its variations~\cite{caron12}.
\item Finally, we measured the performance on graphs estimated from the data used in \cite{caron12}. Once again, Thompson sampling clearly outperforms UCB and its variants.
\end{enumerate}

\section{Algorithms and analysis}
We consider two algorithms based on Thompson sampling. The first uses standard Thompson sampling to select arms. As this also reveals the rewards of neighbouring arms, the posterior is conditioned on those as well. The second algorithm uses Thompson sampling to select an arm, and then chooses the empirically best arm within that arm's clique.

\subsection{The \TSN{} policy}
The \TSN{} policy is an adaptation of \TS{} for graph-structured
feedback. \TS{} maintains a distribution over the problem parameters.
At each step, it selects an arm according to the probability of its
mean being the largest. It then observes a set of rewards which it
uses to update its probability distribution over the parameters.

For the case where each arm has an independent parameter defining its reward distribution, we can update the distribution of all arms observed separately. A particularly simple case is when all the reward are generated from Bernoulli distributions. Then we can simply use a Beta prior for each arm, illustrated by the  \TSN{} policy in Algorithm~\ref{alg:tsn}. We note that the algorithm trivially extends to other priors and families.
\begin{algorithm}                      
	\caption{\TSN{} (Bernoulli case)}
	\label{alg:tsn}                           
	\begin{algorithmic}
		
		\State For each arm $i$, set $S_i = 1$ and $F_i = 1$
		
		\ForAll{ round $t=1,\cdots, T$}
		
			\State \algparbox{For each arm $i$, sample $\theta_i$ from the Beta distribution $\Beta(S_i, F_i)$}
			\State Play arm $A_t = \argmax_{i \in \ACT{}} \theta_i$ 
			
			\ForAll{$k \in \Neighbor_{A_t}$}
			
				\State $\hat{r}_k = \Bernoulli(r_k)$
				
				\State If $\hat{r}_k = 1$ the $S_k = S_k + 1$, else $F_k = F_k + 1$
			
			\EndFor
		
		\EndFor
		
	\end{algorithmic}
	
\end{algorithm}

\subsection{The \TSMAXN{} policy}
The \TSN{} policy does not fully exploit the graphical structure. For example, as noted by \cite{caron12}, instead of doing exploration on \aRM{} $i$ we could explore an apparently better neighbour, which would give us the same information. More precisely, instead of picking arm $i$, we pick the arm $j \in \Neighbor_i$ with the best empirical mean. The intuition behind it is that, if we take any \aRM{} in $\Neighbor_i$, we are going to observe anyway the reward of $i$. So it is always better to exploit the best arm in $\Neighbor_i$.
The resulting policy, \TSMAXN{} is summarized in Algorithm \ref{alg:tsmaxn}. Although our theoretical results do not apply to this policy, it can have better performance as it uses more information.
\begin{algorithm}                      
	\caption{\TSMAXN{}}
	\label{alg:tsmaxn}                           
	\begin{algorithmic}
		
		\State For each arm $i$, set $S_i = 1$ and $F_i = 1$
		\State Let $\bar{x}_i$ be the empirical mean of arm $i$
		
		\ForAll{ round $t=1,\cdots, T$}
		
		\State \algparbox{For each arm $i$, sample $\theta_i$ from the Beta distribution $\Beta(S_i, F_i)$}
		\State Let $j =  \argmax_{i \in \ACT{}} \theta_i$  
		\State Play arm $A_t = \argmax_{k \in \Neighbor_{j}} \bar{x}_k$ 
		
		\ForAll{$k \in \Neighbor_{A_t}$}
		
		\State $\hat{r}_k = \Bernoulli(r_k)$
		
		\State If $\hat{r}_k = 1$ the $S_k = S_k + 1$, else $F_k = F_k + 1$
		
		\EndFor
		
		\EndFor
		
	\end{algorithmic}
	
\end{algorithm}

\subsection{Analysis of \TSN{} policy}

Russo and van Roy introduced an elegant approach to the analysis of
Thompson sampling. They define the \emph{information ratio} as a key
quantity for analysing information structures:
\begin{equation}
\label{eq:ir}
\Gamma_t : = \frac{\E_t \left[ R(Y_{t,A^*}) - R(Y_{t,A_t})\right]^2}{\INFO_t(A^*, (A_t,Y_{t,A_t}))},
\end{equation}
where $\E_t$ and $\INFO_t$ denote expectation and mutual information respectively, conditioned on the history of arms and observations until time $t$.
They show that it follows very generally that
\begin{proposition}
\label{prop:ts}
If $\Gamma_t \leq \Gamma$ almost surely for all $1 \leq t \leq T$, then,
$\E \REGRET(T, \pi^{TS}) \leq \sqrt{\Gamma \ENTROPY(\alpha_1) T}.$
\end{proposition}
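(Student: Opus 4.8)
The plan is to follow the information-ratio argument of Russo and van Roy, which turns the regret bound into a purely information-theoretic statement. First I would use the tower property to write the total Bayesian regret as a sum of per-round conditional regrets,
\[
\E \REGRET(T, \pi^{TS}) = \sum_{t=1}^T \E\!\left[ \E_t\!\left[ R(Y_{t,A^*}) - R(Y_{t,A_t}) \right] \right].
\]
Writing $\Delta_t := \E_t[ R(Y_{t,A^*}) - R(Y_{t,A_t}) ]$ for the per-round conditional regret and $g_t := \INFO_t(A^*, (A_t, Y_{t,A_t}))$ for the per-round information gain, the very definition of the information ratio \eqref{eq:ir} reads $\Gamma_t = \Delta_t^2 / g_t$, hence $\Delta_t = \sqrt{\Gamma_t\, g_t}$. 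Since $\Delta_t \ge 0$ (the optimal arm dominates in expectation) and the hypothesis gives $\Gamma_t \le \Gamma$ almost surely, I obtain the pointwise bound $\Delta_t \le \sqrt{\Gamma}\,\sqrt{g_t}$.

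Next I would aggregate over the $T$ rounds. Applying Cauchy--Schwarz to the deterministic sequence $(\E[\sqrt{g_t}])_{t=1}^T$ and then Jensen's inequality (concavity of the square root, so that $(\E[\sqrt{g_t}])^2 \le \E[g_t]$) yields
\[
\E \REGRET(T, \pi^{TS}) \;\le\; \sqrt{\Gamma}\sum_{t=1}^T \E[\sqrt{g_t}] \;\le\; \sqrt{\Gamma}\,\sqrt{\, T \sum_{t=1}^T \E[g_t] \,}.
\]
This reduces everything to controlling the cumulative expected information gain $\sum_{t=1}^T \E[g_t]$ about the identity of the optimal arm.

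The crux, and the step I expect to demand the most care, is the information-theoretic bound $\sum_{t=1}^T \E[g_t] \le \ENTROPY(\alpha_1)$, where $\alpha_1$ denotes the prior over the identity of $A^*$. The idea is that $\E[g_t] = \E[\INFO_t(A^*, (A_t, Y_{t,A_t}))]$ is the expected reduction in uncertainty about $A^*$ contributed by the round-$t$ observation, where $\INFO_t$ is a mutual information conditioned on the history $h_t$; the chain rule for mutual information then telescopes these conditional terms into the single joint mutual information between $A^*$ and the entire observation sequence $((A_1,Y_{1,A_1}),\dots,(A_T,Y_{T,A_T}))$. Since mutual information never exceeds the entropy of either argument, this joint quantity is at most $\ENTROPY(A^*) = \ENTROPY(\alpha_1)$. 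The delicate points are to express each $\INFO_t$ consistently as a conditional mutual information given $h_t$, to verify that $A_t$ being $h_t$-measurable under $\pi^{TS}$ does not inflate the telescoping sum, and to handle the outer expectation correctly so the chain rule applies in the averaged form. Substituting this bound into the previous display gives $\E \REGRET(T, \pi^{TS}) \le \sqrt{\Gamma\, \ENTROPY(\alpha_1)\, T}$, as claimed.
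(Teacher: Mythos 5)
Your argument is correct and is essentially the proof of Russo and van Roy that the paper invokes by citation for this proposition: decompose the Bayesian regret into per-round conditional regrets, bound each by $\sqrt{\Gamma}\sqrt{g_t}$ via the definition of the information ratio, aggregate with Cauchy--Schwarz and Jensen, and telescope the conditional mutual informations via the chain rule into $\INFO(A^*; h_{T+1}) \leq \ENTROPY(\alpha_1)$. The only cosmetic remark is that you do not actually need $\Delta_t \geq 0$, since $\Delta_t \leq |\Delta_t| = \sqrt{\Gamma_t g_t}$ follows directly from the squared numerator in the definition of $\Gamma_t$.
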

Here $\ENTROPY$ denotes entropy.
Thus to analyse the performance of Thompson sampling on a specific problem, one may focus on bounding the 
information ratio (\ref{eq:ir}). For the (independent) $K$-armed bandit
case, they show that $\Gamma_t \leq K/2$, while 
for full-information ($K$ experts) case, they show
that $\Gamma_t \leq 1/2$. We now give a simple but
useful extension of their results which is intermediate between these
cases.

\begin{proposition}
\label{prop:eq}
Let $\equiv$ be an equivalence relation defined on the arms with
$\eqva$ denoting the equivalence class of $a$.  Let
$Y_{t,a} = (a, Z_{t,\eqva})$ for sequence of random variables
$Z_{t, \eqva} : \Omega \to \CZ$.  Then
$\Gamma_t \leq \frac{1}{2} |K/\equiv|$, half the number of equivalence
classes.
\end{proposition}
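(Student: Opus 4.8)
The plan is to follow Russo and van Roy's template that yields $\Gamma_t \le K/2$ for the independent bandit, but to exploit the fact that pulling \emph{any} arm of an equivalence class $C$ returns the \emph{same} signal $Z_{t,\eqva}$. This makes the mutual information in the denominator collapse over classes, so the combinatorial loss is $|K/\equiv|$ rather than $K$. Throughout I write $\alpha_a = \Pr_t(A^* = a)$, and note that since Thompson sampling draws $A_t$ from the posterior of $A^*$ we have $\Pr_t(A_t = a) = \alpha_a$. Put $\mu_a = \E_t[R(Y_{t,a})]$ and $\mu_a^* = \E_t[R(Y_{t,a}) \mid A^* = a]$, and for a class $C$ set $\beta_C = \sum_{a \in C}\alpha_a = \Pr_t(A^* \in C)$ and $N = |K/\equiv|$.

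For the numerator I would first use that $A_t$ is drawn independently of the outcomes given $h_t$, so the conditional expected regret factors as $\E_t[R(Y_{t,A^*}) - R(Y_{t,A_t})] = \sum_a \alpha_a(\mu_a^* - \mu_a)$. Grouping this sum by classes as $\sum_C D_C$ with $D_C = \sum_{a \in C}\alpha_a(\mu_a^* - \mu_a)$, I would apply Cauchy--Schwarz twice: once \emph{across} the $N$ classes to get $\left(\E_t[\Delta_t]\right)^2 \le N \sum_C D_C^2$, and once \emph{within} each class to get $D_C^2 \le \beta_C \sum_{a \in C}\alpha_a(\mu_a^* - \mu_a)^2$.

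For the denominator comes the structural heart. Since the label $a$ in $Y_{t,a} = (a, Z_{t,\eqva})$ is deterministic it carries no information about $A^*$, so $\INFO_t(A^*, Y_{t,a}) = \INFO_t(A^*, Z_{t,\eqva})$, and because $Z_{t,\eqva}$ depends only on the class, all arms of $C$ share it. Using $A_t \perp A^*$ together with the chain rule, $\INFO_t(A^*, (A_t, Y_{t,A_t})) = \sum_a \alpha_a \INFO_t(A^*, Z_{t,\eqva}) = \sum_C \beta_C \INFO_t(A^*, Z_{t,C})$. To lower bound each term I would write $\INFO_t(A^*, Z_{t,C}) = \sum_{a'}\alpha_{a'} D_{\mathrm{KL}}\!\left(\Pr_t(Z_{t,C} \in \cdot \mid A^* = a') \,\|\, \Pr_t(Z_{t,C} \in \cdot)\right)$, restrict the outer sum to $a' \in C$, push each $Z_{t,C}$ through the deterministic map $z \mapsto R((a',z))$ (data-processing inequality), and finish with Pinsker's inequality for $[0,1]$-valued rewards to obtain $\INFO_t(A^*, Z_{t,C}) \ge 2\sum_{a' \in C}\alpha_{a'}(\mu_{a'}^* - \mu_{a'})^2$. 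Combined with the within-class Cauchy--Schwarz this gives $D_C^2 \le \tfrac12 \beta_C \INFO_t(A^*, Z_{t,C})$; summing over $C$ yields $\sum_C D_C^2 \le \tfrac12 \INFO_t(A^*, (A_t, Y_{t,A_t}))$, and the cross-class factor $N$ then gives $\left(\E_t[\Delta_t]\right)^2 \le \tfrac{N}{2}\INFO_t(A^*, (A_t, Y_{t,A_t}))$, i.e.\ $\Gamma_t \le \tfrac12|K/\equiv|$.

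The main obstacle is the information accounting: verifying that the deterministic label contributes nothing and that the signal genuinely collapses per class, so the denominator becomes $\sum_C \beta_C \INFO_t(A^*, Z_{t,C})$ and not a $K$-fold sum, and then arranging that the data-processing/Pinsker lower bound --- which naturally produces only the \emph{diagonal} terms $(\mu_{a'}^* - \mu_{a'})^2$ --- interlocks exactly with the within-class Cauchy--Schwarz, so that the two Cauchy--Schwarz steps and the per-class information bound line up without slack beyond the factor $N$. Bounded rewards are needed for Pinsker, and the two extremes ($K$ singleton classes giving $K/2$, one class giving $1/2$) should drop out as sanity checks.
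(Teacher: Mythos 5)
Your proof is correct and is essentially the argument the paper intends: the paper defers details to its full version but explicitly describes the result as a direct generalisation of Russo and van Roy's propositions for the bandit and full-information cases, and your two-level Cauchy--Schwarz on the numerator combined with the per-class Pinsker/data-processing lower bound $\INFO_t(A^*, Z_{t,C}) \ge 2\sum_{a'\in C}\alpha_{a'}(\mu_{a'}^*-\mu_{a'})^2$ is exactly that generalisation. The information accounting you flag as the main obstacle is handled correctly (the deterministic label carries no information, dropping the KL terms for $a'\notin C$ is valid since KL is nonnegative, and the bound interpolates to $K/2$ and $1/2$ at the two extremes).
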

This is a direct generalisation of propositions 3 and 4 in \cite{russo2014information}, to which it reduces when the equivalence relation is trivial (bandit case) or full (expert case).

We can now use Proposition~\ref{prop:eq} to analyse
graph structured arms:
\begin{lemma}
\label{lem:gr}
Let $\GRAPH=(\ACT,\EDGES)$ be a graph with $V$ corresponding to the arms and
suppose that when an arm $a$ is played, we observe the rewards 
$R(Y_{t,a'})$ for all $a' \in N(a)$ i.e we observe the rewards
corresponding to both $a$ and all its neighbours. Let ${\cal C}$ be a
clique cover of $G$ i.e. a partition of $V$ into cliques. Then
$\Gamma_t \leq \frac{1}{2} |{\cal C}|$.
\end{lemma}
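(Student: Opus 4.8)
The plan is to realise the clique cover as an equivalence relation and invoke Proposition~\ref{prop:eq}. I would define $a \equiv a'$ whenever $a$ and $a'$ belong to the same clique $S \in \CLIQUE$. Since $\CLIQUE$ is by definition a partition of $\ACT$ into cliques, this is a genuine equivalence relation, the equivalence class $\eqva$ of an arm $a$ is exactly the clique containing $a$, and the number of classes $|\ACT/\!\equiv|$ equals $|\CLIQUE|$, the size of the cover. Thus the whole lemma is reduced to exhibiting the observation structure required by Proposition~\ref{prop:eq} for this particular relation.

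Next I would verify the structural hypothesis $Y_{t,a} = (a, Z_{t,\eqva})$. The key observation is that each $S \in \CLIQUE$ is a clique, so every arm $a \in S$ is adjacent to all the others; hence playing any arm of $S$ reveals the entire reward vector of that clique. Concretely, I would set $Z_{t,\eqva} = \bigl(R(Y_{t,a'})\bigr)_{a' \in \eqva}$, the tuple of rewards of the arms lying in the clique. This random object depends only on the class $\eqva$ and not on which representative is actually played, and the played arm's own reward $R(Y_{t,a})$ is recovered as the corresponding coordinate of $Z_{t,\eqva}$. The reduced observation is therefore exactly of the form $(a, Z_{t,\eqva})$ demanded by Proposition~\ref{prop:eq}, with $|\ACT/\!\equiv| = |\CLIQUE|$ classes.

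The one point that needs care is that graph feedback actually reveals more than the clique: playing $a$ also exposes the rewards of its neighbours outside $\eqva$. I would reconcile this with a monotonicity argument. Because every member of $\eqva$ is a neighbour of $a$, the clique observation $Z_{t,\eqva}$ is a deterministic function of the full feedback $\{R(Y_{t,a'}) : a' \in \Neighbor_a\}$ once the played arm is fixed. The numerator of the information ratio~\eqref{eq:ir} involves only $R(Y_{t,A^*})$ and $R(Y_{t,A_t})$, so it is identical under the full and the reduced (clique-only) feedback models; by the data-processing inequality, however, the mutual information in the denominator can only grow when we condition on the richer graph feedback. Consequently the information ratio under graph feedback is at most the information ratio of the reduced model, to which Proposition~\ref{prop:eq} directly applies, yielding $\Gamma_t \le \tfrac{1}{2}|\ACT/\!\equiv| = \tfrac{1}{2}|\CLIQUE|$.

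I expect the main obstacle to be making this monotonicity step fully rigorous. One must check both that the reduced clique observation is genuinely a measurable function of the graph feedback given $A_t$, so that the data-processing inequality is applied in the correct direction on the denominator, and that passing to the reduced model leaves the squared-regret numerator untouched, so that only the denominator changes. Everything else — constructing the equivalence relation and counting its classes — is bookkeeping, and the quantitative bound then follows immediately from Proposition~\ref{prop:eq}.
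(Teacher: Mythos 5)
Your proof is correct and follows exactly the route the paper intends: realise the clique cover as the equivalence relation of Proposition~\ref{prop:eq}, take $Z_{t,\eqva}$ to be the reward vector of the clique, and note that the extra feedback from neighbours outside the clique can only increase the mutual information in the denominator of~\eqref{eq:ir}, so the bound $\Gamma_t \le \tfrac{1}{2}|\CLIQUE|$ carries over. The data-processing step you flag as the delicate point is indeed the right (and only nontrivial) piece of bookkeeping, and it goes through since the clique observation is a deterministic function of $(A_t, \{R(Y_{t,a'}) : a' \in \Neighbor_{A_t}\})$ while the numerator is unaffected.
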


Applying Proposition~\ref{prop:ts} and Lemma~\ref{lem:gr}, we get a
performance guarantee for Thompson sampling with graph-structured feedback:

\begin{theorem}
\label{th:gr}
For Thompson sampling with feedback from the graph
$G$, we have $\E^{\pi^{TS}} \REGRET \leq \sqrt{\frac{1}{2} \chi(\overline{G}) \ENTROPY(\alpha_1)
  T}$, where $\chi(\overline{G}) $ is the \emph{clique cover number} of $G$.
\end{theorem}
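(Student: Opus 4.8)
The plan is to obtain the theorem as an immediate corollary of the two preceding results, since the substantive analysis has already been carried out in Lemma~\ref{lem:gr} and Proposition~\ref{prop:ts}. The only genuine decision left is to choose the clique cover optimally, so that the information-ratio bound becomes as tight as possible.

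First I would recall from Lemma~\ref{lem:gr} that for \emph{any} clique cover $\CLIQUE$ of $\GRAPH$, the information ratio satisfies $\Gamma_t \leq \frac{1}{2}\abs{\CLIQUE}$ almost surely, uniformly over $1 \leq t \leq \HORIZON$. Because this holds for every clique cover, I would specialise to a \emph{minimum} clique cover, i.e.\ one attaining $\abs{\CLIQUE} = \chi(\overline{\GRAPH})$, the clique cover number. This yields the uniform almost-sure bound $\Gamma_t \leq \frac{1}{2}\chi(\overline{\GRAPH})$ for all $1 \leq t \leq \HORIZON$.

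With this bound in hand, I would invoke Proposition~\ref{prop:ts} with $\Gamma = \frac{1}{2}\chi(\overline{\GRAPH})$. Substituting directly gives
\[
\E^{\pi^{TS}} \REGRET \leq \sqrt{\Gamma \, \ENTROPY(\alpha_1) \HORIZON} = \sqrt{\tfrac{1}{2}\chi(\overline{\GRAPH}) \, \ENTROPY(\alpha_1) \HORIZON},
\]
which is exactly the claimed inequality.

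At this level there is no genuinely hard step remaining: all the difficulty has been absorbed into Lemma~\ref{lem:gr}, whose proof in turn rests on the equivalence-class refinement of the information-ratio bound in Proposition~\ref{prop:eq}. The one point that deserves care is simply checking that the almost-sure, uniform-in-$t$ bound on $\Gamma_t$ produced by Lemma~\ref{lem:gr} matches precisely the hypothesis of Proposition~\ref{prop:ts}; since both are stated for all $1 \leq t \leq T$ almost surely, they compose cleanly. As a sanity check one may also verify that the bound reduces correctly to the extreme cases: when $\GRAPH$ is empty each vertex is its own clique so $\chi(\overline{\GRAPH}) = K$, recovering the $O(\sqrt{K \ENTROPY(\alpha_1) T})$ bandit bound, while when $\GRAPH$ is complete the whole vertex set is a single clique so $\chi(\overline{\GRAPH}) = 1$, recovering the full-information bound.
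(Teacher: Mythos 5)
Your proposal is correct and matches the paper's own argument exactly: the paper likewise obtains Theorem~\ref{th:gr} by instantiating Lemma~\ref{lem:gr} with a minimum clique cover to get $\Gamma_t \leq \frac{1}{2}\chi(\overline{\GRAPH})$ and then plugging this into Proposition~\ref{prop:ts}. Your added sanity checks on the empty and complete graph cases correspond to the remark the paper makes immediately after the theorem.
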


\begin{remark}
The bandit and expert cases are special cases corresponding to the
empty graph and the complete graph respectively since
$\chi(\overline{G}) = K$ for the empty graph and $\chi(\overline{G}) =
1$ for the complete graph.
\end{remark}

\begin{remark}[Planted Partition Models]
  The \emph{planted partition models} or \emph{stochastic block
    models} graphs $G(n,k,p,q)$ are defined as follows
  \cite{mcsherry01,condon01}: first a fixed partition of the $n$
  vertices into $k$ parts is chosen, then an edge between two vertices
  within the same class exists with probability $p$ and that between
  vertices in different classes exists with probability $q$,
  independently with $p > q$. If $p=1$, then with high probability,
  the clique cover number of the resulting graph is $k$ (corresponding
  to the planted $k$ cliques). Thus for this class of graphs, the
  regret grows as $O(\sqrt{k})$ as per Theorem~\ref{th:gr}. This is
  explored in Section~\ref{sec:experiments}. When $p \not = 1$ but large, the planted
  partition graph is considered a good model of the structure of
  network communities.
\end{remark}

If the underlying graph changes at each time step, then we also have
the bound for the same algorithm:
\begin{corollary}
	Suppose the underlying graph at time $t \geq 1 $ is $G_t$, then:
	\[ \E^{\pi^{TS}} \REGRET \leq \sqrt{\frac{1}{2} \max_t \chi(\overline{G_t}) \ENTROPY(\alpha_1)
		T} \]
\end{corollary}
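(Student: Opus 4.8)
The plan is to reduce the time-varying case to the fixed-graph analysis by exploiting the fact that the information ratio $\Gamma_t$ defined in~\eqref{eq:ir} is an entirely instantaneous quantity: both the expected squared regret in its numerator and the mutual information $\INFO_t(A^*,(A_t,Y_{t,A_t}))$ in its denominator are computed from the posterior at time $t$ and from the single-step observation $Y_{t,A_t}$. The optimal arm $A^*$ and the reward law $P_\param$ are fixed by the prior and do not depend on the feedback structure at all; the graph $G_t$ enters only by determining which rewards are revealed at step $t$. Consequently, nothing in the derivation of Lemma~\ref{lem:gr} actually uses that the feedback graph is the same across rounds, and the whole argument can be localised to a single round.

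First I would re-run the argument of Lemma~\ref{lem:gr} at a fixed round $t$ with $G_t$ in place of $G$. Let $\CC_t$ be a minimum clique cover of $G_t$, and define an equivalence relation on the arms by declaring $a \equiv a'$ iff they lie in the same block of $\CC_t$. Because each block $S \in \CC_t$ is a clique of $G_t$, playing any $a \in S$ reveals the rewards of every arm in $S$, so the observation takes the form $Y_{t,a} = (a, Z_{t,\eqva})$ demanded by Proposition~\ref{prop:eq}. Applying that proposition at round $t$ yields the per-round bound $\Gamma_t \leq \tfrac12 |\CC_t| = \tfrac12 \chi(\overline{G_t})$.

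Next I would pass to a uniform bound. Setting $\Gamma \defn \tfrac12 \max_{1 \leq t \leq T} \chi(\overline{G_t})$, the per-round inequality gives $\Gamma_t \leq \Gamma$ for every $t$; since the graphs $G_t$ form a prescribed sequence, $\Gamma$ is a deterministic constant and this bound holds almost surely, which is exactly the hypothesis of Proposition~\ref{prop:ts}. Invoking that proposition — whose conclusion depends on the graphs only through the single uniform bound $\Gamma$, and on the prior only through the graph-independent entropy $\ENTROPY(\alpha_1)$ — gives
\[
\E^{\pi^{TS}} \REGRET \leq \sqrt{\Gamma \, \ENTROPY(\alpha_1)\, T} = \sqrt{\tfrac12 \max_t \chi(\overline{G_t})\, \ENTROPY(\alpha_1)\, T},
\]
as claimed.

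The step I expect to require the most care is the first one: confirming that Lemma~\ref{lem:gr} is genuinely a single-round statement, so that substituting $G_t$ for $G$ disturbs no global object in its proof. The one quantity to check is the mutual-information denominator, since $Y_{t,A_t}$ now carries round-dependent structure; once one verifies that the clique-cover decomposition of Proposition~\ref{prop:eq} only couples arms that are observed together \emph{at the same round} $t$, the rest is the routine maximisation over $t$ and a direct appeal to Proposition~\ref{prop:ts}.
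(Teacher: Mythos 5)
Your proposal is correct and follows essentially the same route as the paper: the paper's own proof simply observes that the information ratio at time $t$ is bounded via the clique cover number of $G_t$, which is at most $\max_t \chi(\overline{G_t})$, and then invokes Proposition~\ref{prop:ts}. Your write-up just makes explicit the per-round locality of Lemma~\ref{lem:gr} that the paper leaves implicit.
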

\begin{proof}
  The information ratio at time $t$ is bounded by $\chi(\overline{G_t}) \leq \max_t  \chi(\overline{G_t})$.
\end{proof}

\section{Experiments}
\label{sec:experiments}
We compared our proposed algorithms in terms of the actual expected regret against a number of other algorithms that can take advantage of the graph structure. Our comparison was performed over both synthetic graphs and networks derived from real-world data.\footnote{Our source code and data sets will be made available on an open hosting website.}

\subsection{Algorithms and hyperparameters.}
In all our experiments, we tested against the \UCBMAX{} and \UCBN{} algorithms, introduced in ~\cite{caron12}. These are the analogues of our algorithms, using upper confidence bounds instead of Thompson sampling.

\subsubsection{\EPSGREEDYD{} and \EPSGREEDYLP{}.}

For the real-world networks, we also evaluated our algorithms against
a variant of \EPSGREEDYLP{} from \cite{buccapatnam2014stochastic}.
This is based on a linear program formulation for finding a lower
bound $\gamma(G)$ on the size of the minimum dominating set. We
observe first that their analysis holds for \emph{any} fixed
dominating set $D$ and the bound so obtained is $O(|D| \ln T)$. In
particular, we may use a simple greedy algorithm to compute a
near--optimal dominating set $D'$ such that
$|D'| \leq \gamma(G) \log \Delta$, where $\Delta$ is the maximum
degree of the graph. \footnote{No polynomial time algorithm can
  guarantee a better approximation unless \textsc{P}=\textsc{NP},
  \cite{ruan2004greedy}} Using such a near optimal dominating set in
place of the LP relaxation and choosing arms from it uniformly at
random, we obtain a variant of the original algorithm, which we call
\EPSGREEDYD{}, which is much more computationally efficient, and which
enjoys a similar regret bound:
\begin{theorem}
  The regret of \EPSGREEDYD{} is at most
  $O(\gamma(G) \ln \Delta \ln T)$, where $\Delta$ is the maximum degree
  of the graph.
\end{theorem}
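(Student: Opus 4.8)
The plan is to combine two essentially independent ingredients: a regret guarantee for the $\epsilon_t$-greedy scheme that holds for \emph{any} fixed dominating set, and the approximation guarantee of the greedy dominating-set heuristic. The first ingredient is supplied, after a small adaptation, by the analysis of \cite{buccapatnam2014stochastic}; the second is a classical consequence of the reduction of dominating set to \textsc{Set Cover}. The genuinely new work is to verify that the first bound depends on the chosen set only through its cardinality, and then to instantiate it with the greedy set.

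First I would revisit the analysis of \EPSGREEDYLP{} and check that it never uses the LP-relaxation structure of the dominating set it is built on: the only property it exploits is that, since $D$ is dominating, every arm $a \in \ACT$ lies in the closed neighbourhood $\{u\} \cup \Neighbor_u$ of some $u \in D$. Consequently, in an exploration round (which occurs with probability $\epsilon_t$ and plays an arm drawn uniformly from $D$), arm $a$ is observed with probability at least $\epsilon_t / |D|$, because playing such a $u$ reveals $R(Y_{t,a})$ through the graph feedback. With the standard Auer-style schedule $\epsilon_t = \min\{1, c|D|/(d^2 t)\}$, where $d$ is the minimum suboptimality gap and $c$ a universal constant, this forces every arm to accumulate $\Omega(\ln t)$ observations by round $t$.

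Second, I would reassemble the regret into an exploration term and an exploitation term. The exploration term is at most the expected number of exploration rounds times the unit per-round loss, i.e. $O\!\left(\sum_{t=1}^{T}\epsilon_t\right) = O(|D|\ln T)$ once $d$ is treated as a problem-dependent constant. For the exploitation term, the $\Omega(\ln t)$ observations per arm let a Hoeffding bound control the probability that the empirically best arm is suboptimal; summing these tail probabilities over $t$ contributes only a lower-order (indeed $O(1)$) amount, exactly as in \cite{buccapatnam2014stochastic}. This yields the clean intermediate bound $O(|D|\ln T)$ valid for an arbitrary fixed dominating set $D$.

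Finally, I would instantiate $D$ with the output $D'$ of the greedy algorithm. Minimum dominating set is a \textsc{Set Cover} instance whose sets are the closed neighbourhoods $\{u\}\cup\Neighbor_u$, each of size at most $\Delta+1$, so the greedy guarantee gives $|D'| \leq \gamma(G)\,(1+\ln(\Delta+1)) = O(\gamma(G)\ln\Delta)$. Substituting into the intermediate bound gives $O(|D'|\ln T) = O(\gamma(G)\ln\Delta\,\ln T)$, as claimed. The main obstacle is the verification in the first step: one must confirm that calibrating the exploration probability to $|D|$ rather than to the LP optimum $\gamma^*(G)$ keeps the whole bound linear in $|D|$ and does not covertly reintroduce a dependence on the relaxation value; once that calibration is pinned down, the remaining steps are a routine recombination of the two ingredients.
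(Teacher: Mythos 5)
Your proposal is correct and follows essentially the same route as the paper: the paper's argument is exactly to observe that the analysis of \cite{buccapatnam2014stochastic} depends on the dominating set only through its cardinality, giving $O(|D|\ln T)$ for any fixed dominating set $D$, and then to substitute the greedy set-cover guarantee $|D'| \leq \gamma(G)\ln\Delta$. Your additional care in checking that the exploration schedule can be calibrated to $|D|$ rather than the LP value is precisely the verification the paper's sketch relies on.
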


\EPSGREEDYD{} and \EPSGREEDYLP{} have the hyper-parameters $c$ and $d$, which control the
amount of exploration. We found that its performance is highly
sensitive to their choice. In our experiments, we find the optimal
values for these parameters by performing a separate grid search for
each problem, and only reporting the best results. Since there is no
obvious way to tune these parameters online, this leads to a favourable bias in
our results for this algorithm.
\footnote{A similar observation was made in~\cite{auer2002finite}, which noted that an optimally tuned $\epsilon$-greedy performs almost always best, but its performance can degrade significantly when the parameters are changed. Although \cite{buccapatnam2014stochastic} suggests a method for selecting these parameters, we find that using it leads to a near-linear regret.}

As Thompson sampling is a Bayesian algorithm, we can view the prior
distribution as a hyper-parameter. In our experiments, we always set
that to a Beta(1,1) prior for all rewards.

\subsection{General experimental setup.}
For all of our experiments, we performed \NTRIALS{} independent trials and reported the \emph{median-of-means} estimator\footnote{Used heavily in the streaming literature~\cite{alon1996space}} of the cumulative regret. It partitions the trials into $a_0$ equal groups and return the median of the sample means of each group. 
We set the number of groups to $a_0 = \NGROUPS{}$, so that the confidence interval holds with probability at least $\CONFIDENCE{}$.

We also reported the deviation of each algorithm using the Gini's Mean Difference (GMD hereafter) \cite{gini1912variabilita}. GMD computes the deviation as $\sum_{j=1}^{N} (2j-N-1)x_{(j)}$ with $x_{(j)}$ the $j$-th order statistics of the sample (that is $x_{(1)} \leq x_{(2)} \leq \ldots \leq x_{(N)}$). As shown in \cite{yitzhaki2003gini,david1968miscellanea} the GMD provides a superior approximation of the true deviation than the standard one. To account for the fact that the cumulative regret of our algorithms might not follow a symmetric distribution, we computed the GMD separately for the values above and below the \emph{median-of-means}.

\subsection{Simulated graphs}

In our synthetic problems, unless otherwise stated, the rewards are drawn from a Bernoulli distribution whose mean is generated uniformly randomly in $[0.45,0.55]$ except for the optimal arm whose mean is generated randomly in $[0.55,0.6]$.  
The number of nodes in the graph is \NNODES{}. We tested with  a sparse graph of \NEDGESA{} edges and also with a dense graph of \NEDGESB{} edges.

\paragraph{Erdős–Rényi graphs}

In our first experiment, we generate the graph randomly using the \erdos{} model.
Figure~\ref{fig:erdos:small} and ~\ref{fig:erdos:large} respectively show the result in the sparse and dense graph.

Our first observation here is that all policies take advantage of a large number of edges as their cumulative regret is better by using the dense graph (Figure \ref{fig:erdos:large}) rather than the sparse one (Figure ~\ref{fig:erdos:small}). This confirms the theoretical result as a dense graph will have a smaller clique cover number than a sparse one.

The policy \TSMAXN{} outperforms all other in both the sparse and dense graph model. However, the performance of \TSN{} is very close to that of \TSMAXN{} in the near complete graph. This is explained by the fact that in a near complete graph we have many cliques. It is revealing to see that \TSN{} outperforms both the \UCBN{} and \UCBMAX{} policies.

\paragraph{Power Law graph}
Such graphs are commonly used to generate static scale-free networks \cite{goh2001universal}.
In this experiment, we generated a non-growing random graph with expected power-law degree distribution.

show the results respectively for the dense and sparse graph
Figure \ref{fig:powerlaw:large} and \ref{fig:powerlaw:small} show the results respectively for the dense and sparse graph. Again, the policy \TSMAXN{} clearly outperforms all other. In the sparse graph model, \TSN{} is beaten by \UCBMAX{} at the beginning of the rounds ( $t \leq 100000$), but catches and ended up beating \UCBMAX{}.

\paragraph{Planted Partition Model}

The aim of the experiment on this model is to check the dependency on the number of cliques for each policy. Figure \ref{fig:plantedpartition} shows the results where on the x-axis we have the parameter $k$ of the planted partition graph (which is almost equal to the number of cliques) on a graph with 1024 nodes. On the y-axis we have the relative regret of each policy, i.e. the ratio between the regret of each policy with the regret of the best policy when there are two groups, for ease of comparison. As we can see, all methods' regret scales similarly. Thus, the theoretical bounds appear to hold in practice, and to be somewhat pessimistic. For a larger number of nodes, we would expect the plots to flatten later.

\subsection{Social networks datasets}
Our experiments on real world datasets follow the methodology described in \cite{caron12}. We first infer a graph from data, and then define a reward function for movie recommendation from user ratings. Missing ratings are predicted using matrix factorization. This enables us to generate rewards from the graph.
We explain the datasets, reward function and graph inference in the full version.

\paragraph{Results}
Figure~\ref{fig:facebook:100} shows the results for the Facebook graph and
Figure~\ref{fig:flixster:100} for the Flixster graph. Once again, the Thompson sampling strategies dominate all
other strategies for the Facebook and they are matched by the
optimised \EPSGREEDYD{} policy in the Flixster graph. We notice that
in this setting the gap between the UCB policies and the rest is much
larger, as is the overall regret of all policies. This can be
attributed to the larger size of these graphs.

\section{Conclusion}
We have presented the first Thompson sampling algorithms for sequential decision problems with graph feedback, where we not only observe the reward of the arm we select, but also those of the neighbouring arms in the graph. Thus, the graph feedback allows us the flexibility to model different types of feedback information, from bandit feedback to expert feedback. 
Since the structure of the graph need not be known in advance, our algorithms are directly applicable to problems with changing and/or unknown topology. Our analysis leverages the information-theoretic construction of \cite{russo2014information}, by bounding the expected information gain in terms of fundamental graph properties. Although our problem-independent bound of is not directly comparable to \cite{caron12}, we believe that a problem-independent version of the latter should be $O(\sqrt{\chi \ln T})$, in which case our results would represent an improvement of $O(\sqrt{\chi})$.

In practice, our two variants always outperform \UCBN{}, \UCBMAX{}, which also use graph feedback but rely on upper confidence bounds. We are also favourably compared against \EPSGREEDYD{}, even when we tune the parameters of the latter \emph{post hoc}.

It would be interesting to extend our techniques to other types of
feedback.  For example, the Bayesian foundations of Thompson sampling render our
algorithms applicable to arbitrary dependencies between arms. In
future work, we will analytically and experimentally consider such
problems and related applications. Finally, an open question is the
existence of information-theoretic lower bounds in settings with
partial feedback.

\paragraph{Acknowledgments.} This research was partially supported by the SNSF grants ``Adaptive control with approximate Bayesian computation and differential privacy'' (IZK0Z2\_167522), ``Swiss Sense Synergy'' (CRSII2\_154458), by the the People Programme (Marie Curie Actions) of the European Union's Seventh Framework Programme (FP7/2007-2013) under REA grant agreement number 608743, and the Future of Life Institute.

\begin{figure}[H]
	\includegraphics[width=0.45\textwidth]{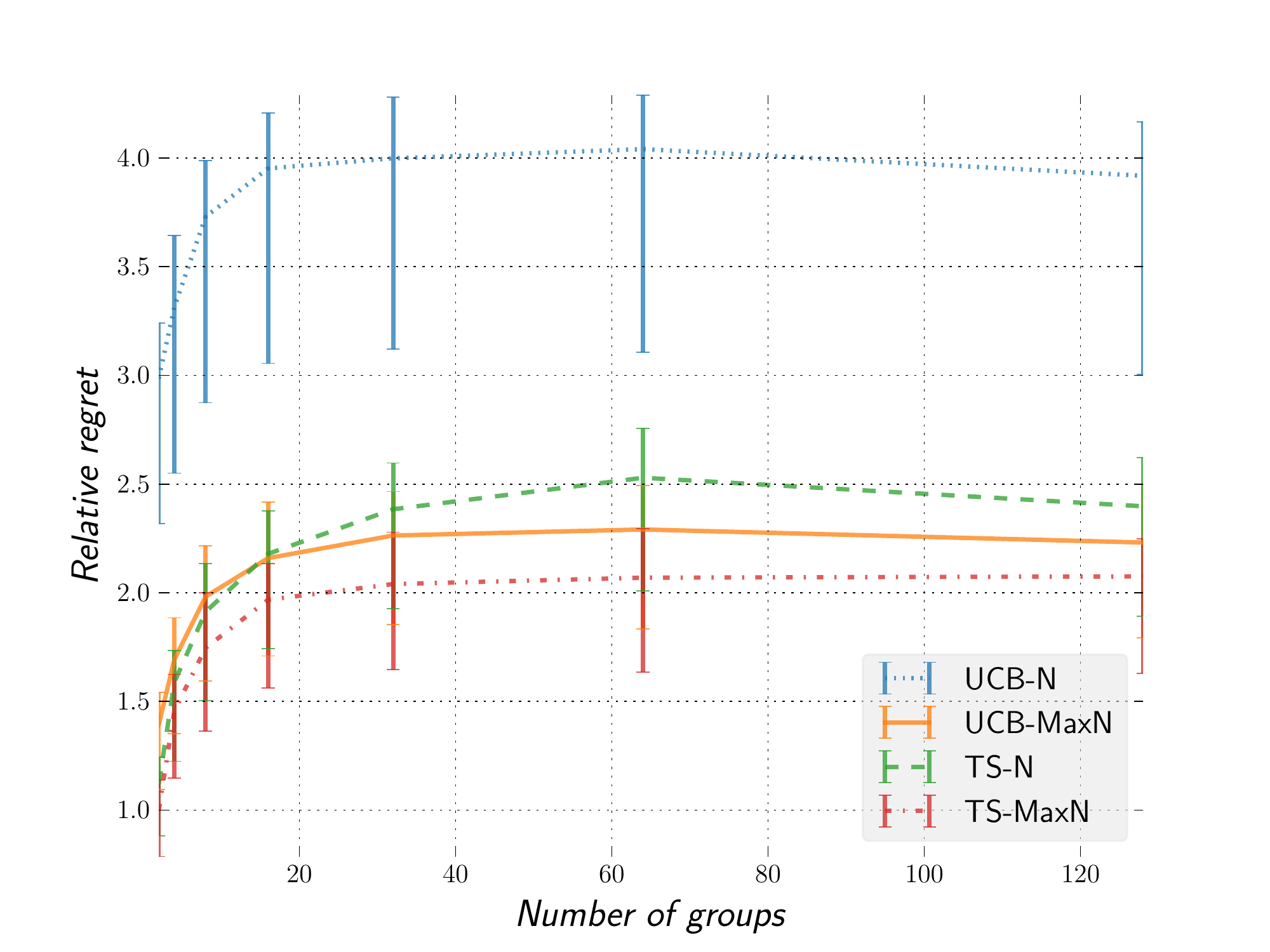}
	\caption{Relative regret in the planted partition setting as the number of groups increases.} 
	\label{fig:plantedpartition}
\end{figure}

\begin{figure*}
	\subfloat[Facebook]{
		\includegraphics[width=0.45\textwidth]{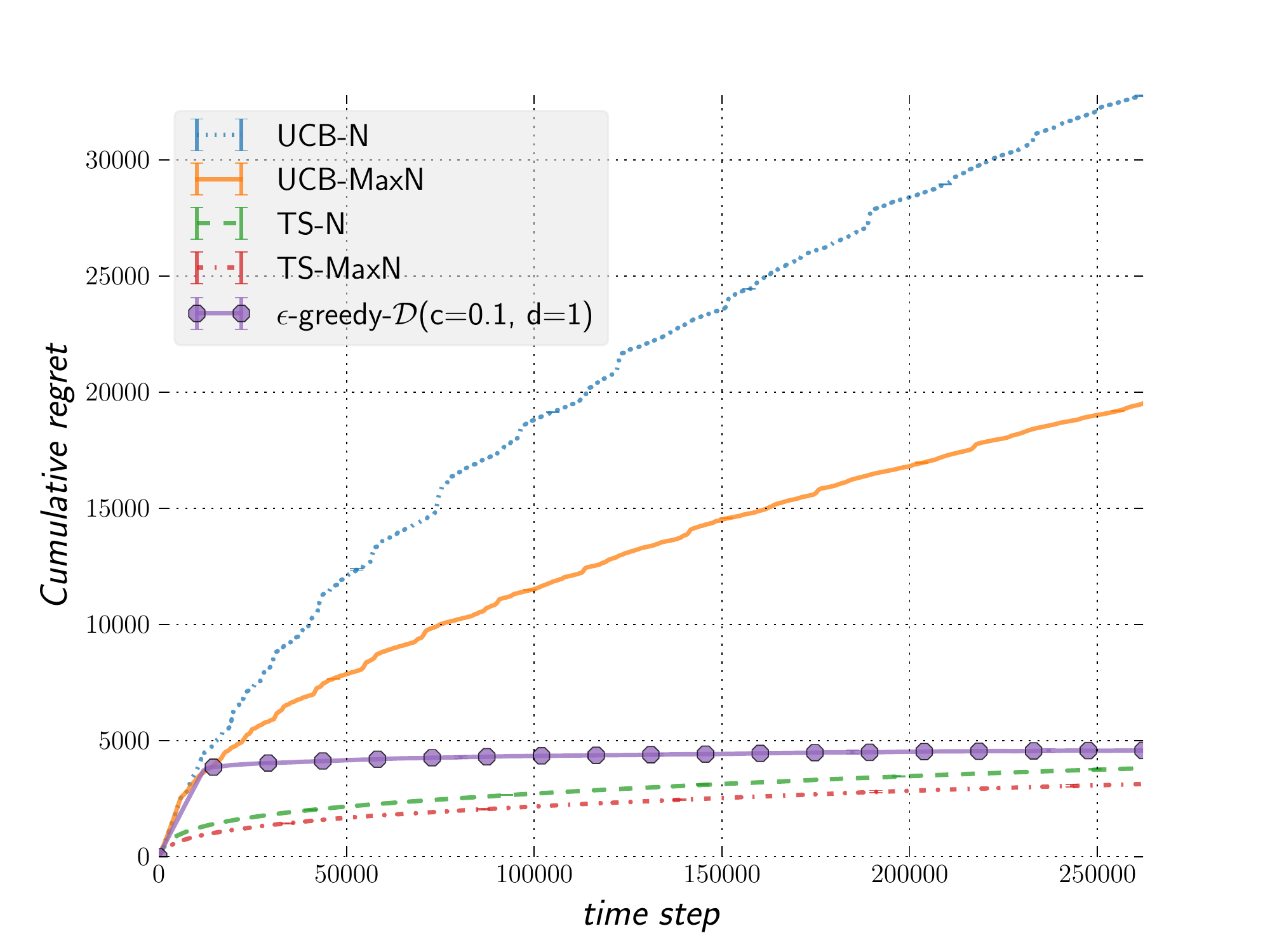}\label{fig:facebook:100}} \subfloat[Flixster]{
		\includegraphics[width=0.45\textwidth]{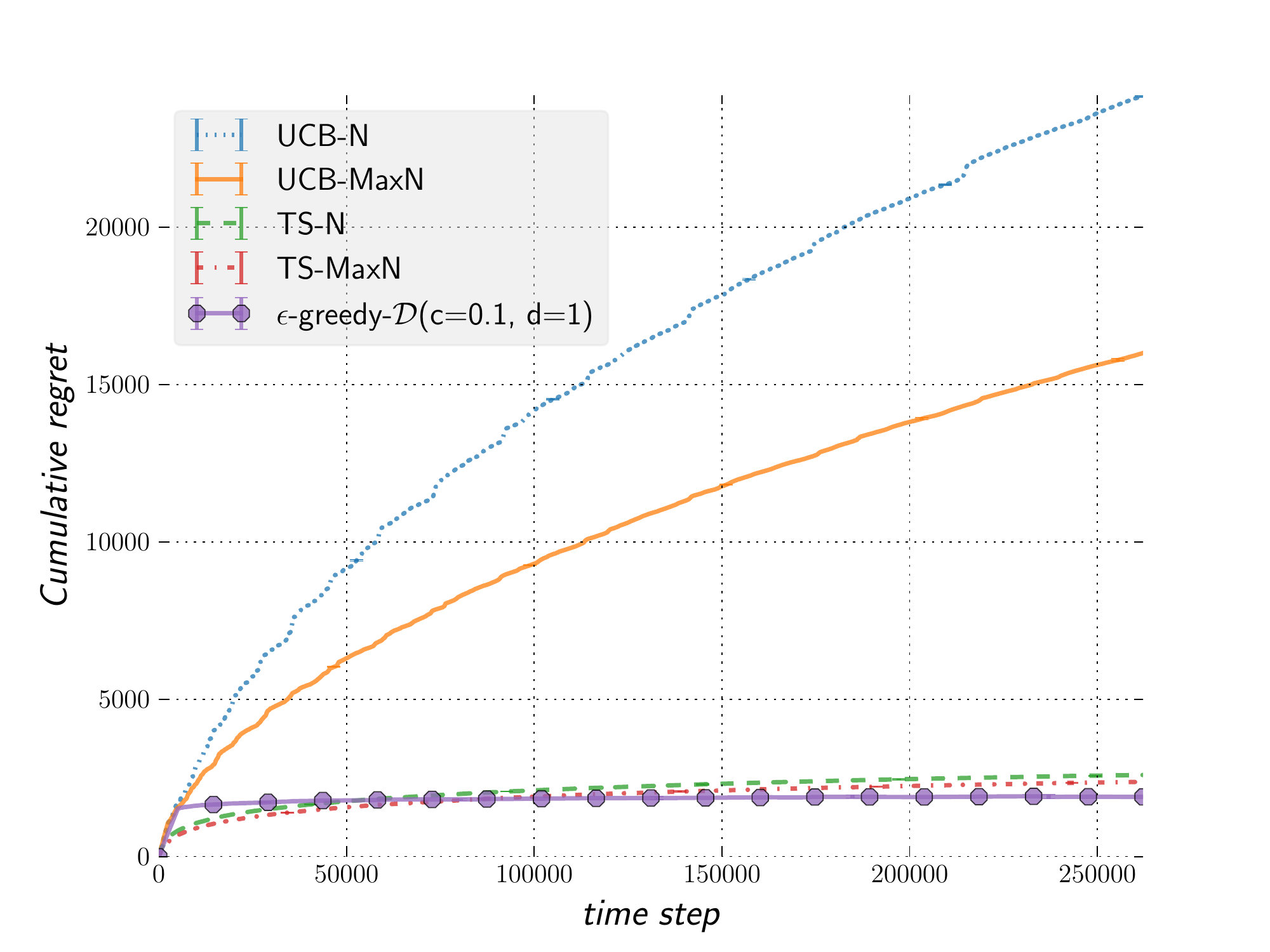}\label{fig:flixster:100}}\\ 
	\subfloat[Power law sparse]{
		\includegraphics[width=0.45\textwidth]{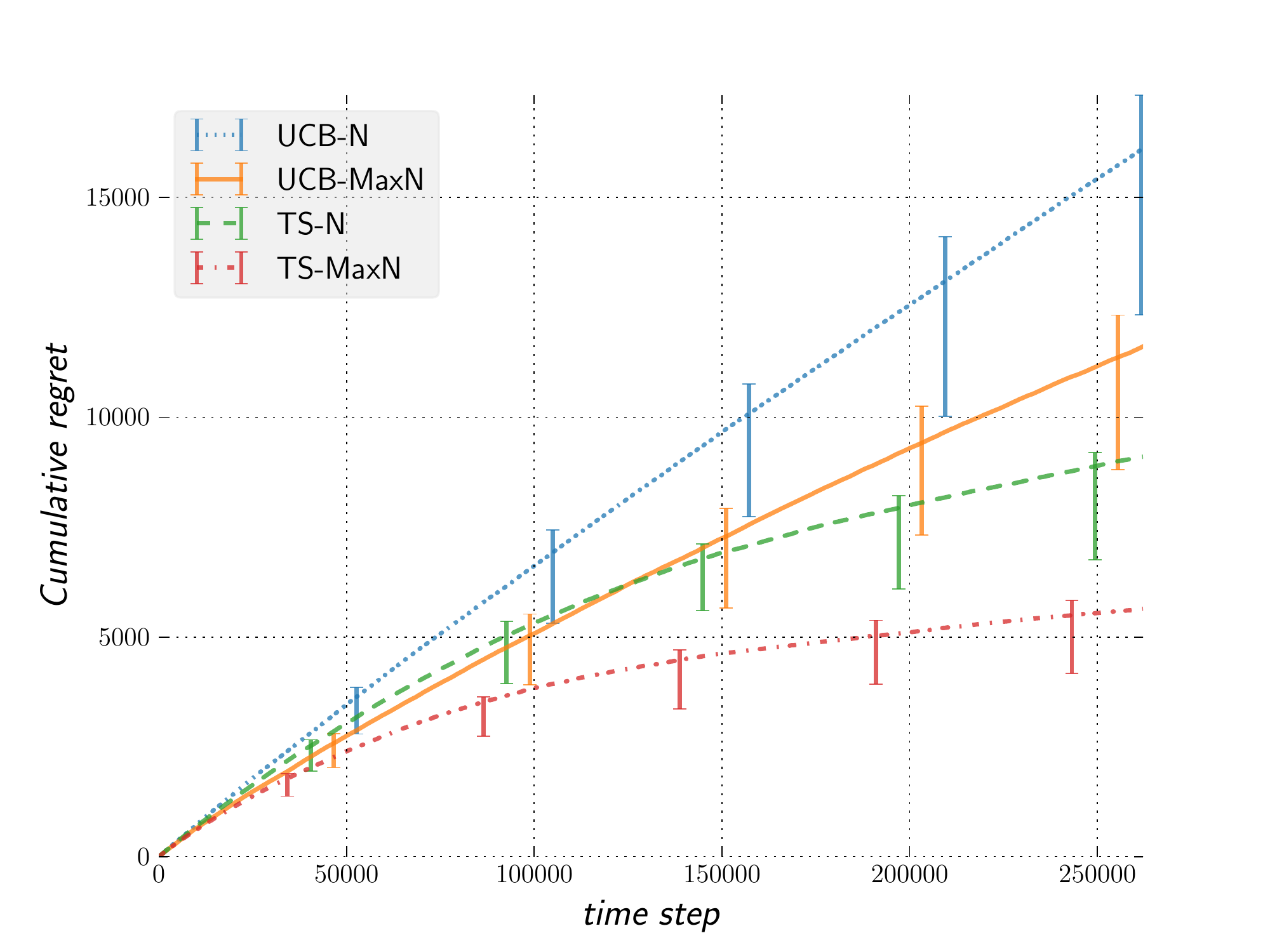}\label{fig:powerlaw:small}} \subfloat[Power law dense]{
		\includegraphics[width=0.45\textwidth]{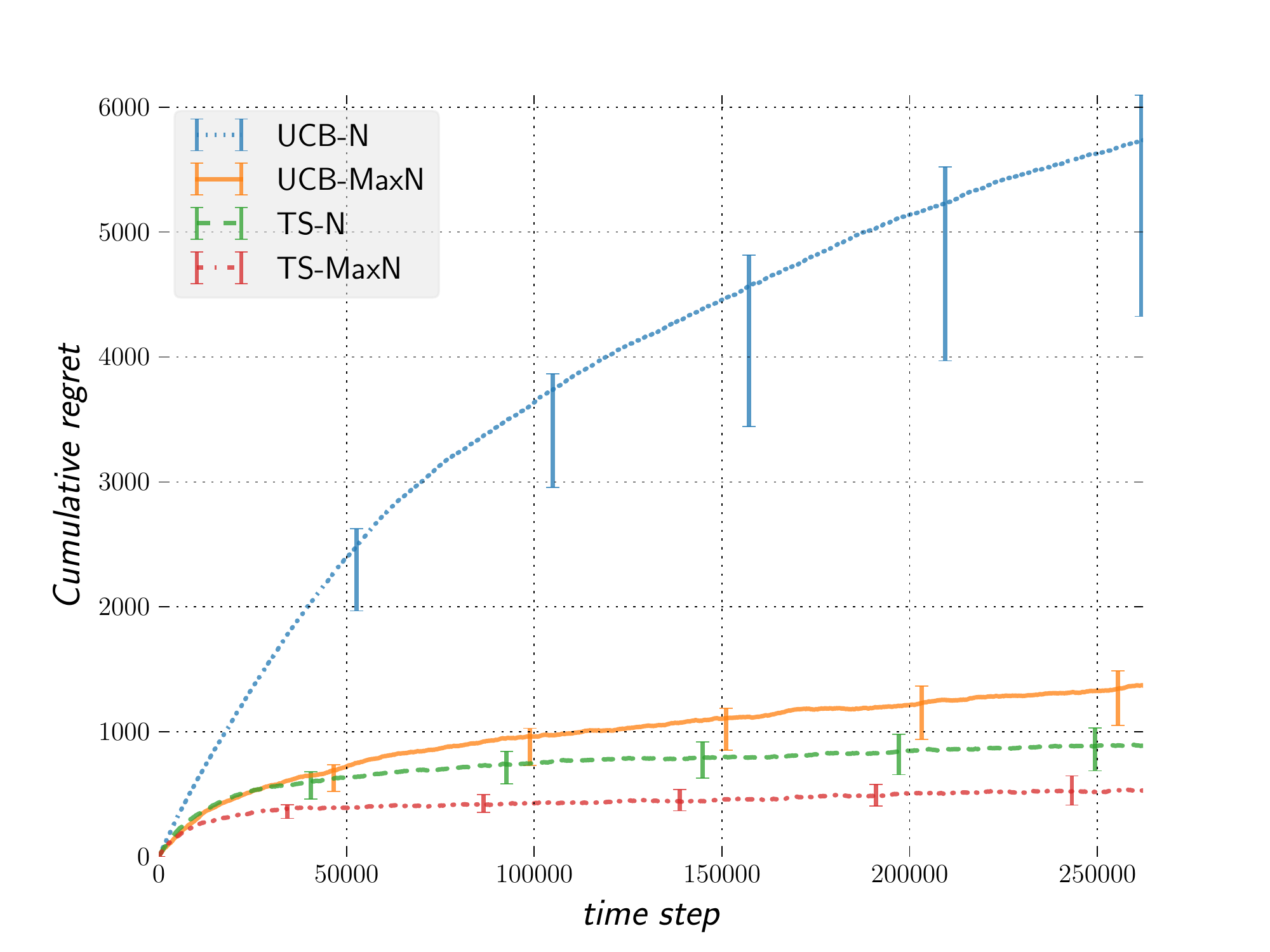}\label{fig:powerlaw:large}} \\ \subfloat[\erdos{} sparse]{
		\includegraphics[width=0.45\textwidth]{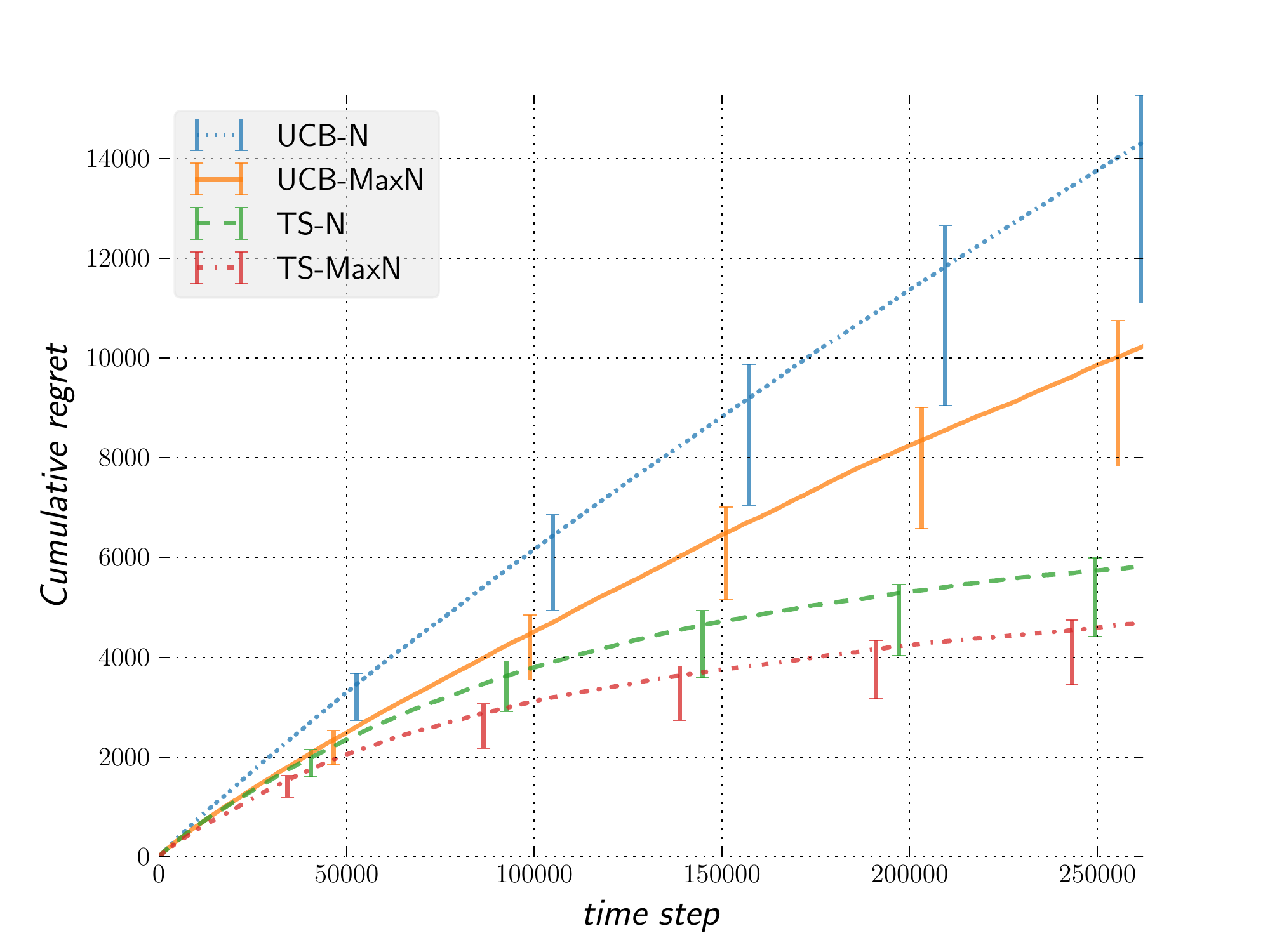}\label{fig:erdos:small}} \subfloat[\erdos{} Dense]{
		\includegraphics[width=0.45\textwidth]{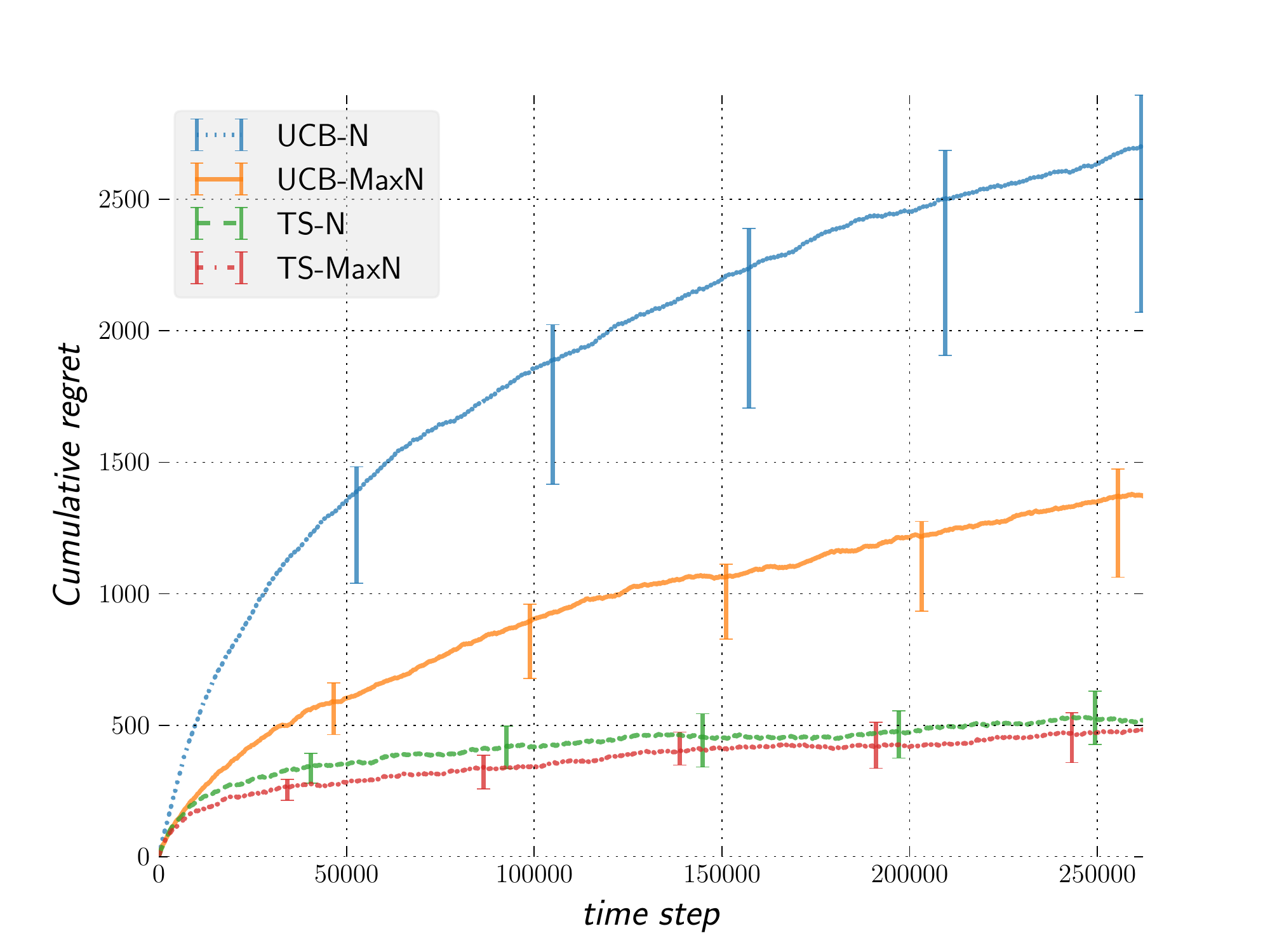}\label{fig:erdos:large}}
	\caption{Regret and error bar on simulated (sparse and dense)  and real social network graphs}\label{fig:experiments}
\end{figure*}

\end{document}